\newtheorem{theorem}{Theorem}
\newtheorem{lemma}{Lemma}
\newtheorem{claim}{Claim}
\newtheorem{definition}{Definition}
\renewcommand{\phi}{\varphi}
\renewcommand{\epsilon}{\varepsilon}
\renewcommand{\[}{\llbracket}
\renewcommand{\]}{\rrbracket}
\mathchardef\mhyphen="2D 
\newenvironment{proof-of-claim}{\noindent{\em Proof of Claim.}}{\hfill $\boxtimes\hspace{2mm}$\linebreak}
\newcommand{\B}{{\sf B}}
\renewcommand{\phi}{\varphi}
\newenvironment{proof}{\noindent{\sc Proof.}}{\hfill $\boxtimes\hspace{2mm}$\linebreak}
\newcommand{\qed}{\hfill $\boxtimes\hspace{1mm}$}
\renewcommand{\[}{\llbracket}
\renewcommand{\]}{\rrbracket}
\title{Trolley Problem Notes}
\title{Logic of Ethical Dilemmas}
\title{Ethical Dilemmas and Limits of Morality}
\title{Trolley Logic: Ethical Dilemmas\\ and Limits of Morality}
\title{The Trolley Modality: A Logic of Ethical Dilemmas}
\title{The Trolley Modality}
\title{Coalition Power Ethical Dilemmas}
\title{Strategic Coalitions and Ethical Dilemmas}
\title{Ethical Dilemmas of Strategic Coalitions}
\title{Ethical Dilemmas in Strategic Games}
\author {
    Pavel Naumov,\textsuperscript{\rm 1}
    Rui-Jie Yew\textsuperscript{\rm 2}\\
}
\begin{document}

\maketitle
\begin{abstract}
An agent, or a coalition of agents, faces an ethical dilemma between several statements if she is forced to make a conscious choice between which of these statements will be true. This paper proposes to capture ethical dilemmas as a modality in strategic game settings with and without limit on sacrifice and for perfect and imperfect information games. The authors show that the dilemma modality cannot be defined through the earlier proposed blameworthiness modality. The main technical result is a sound and complete axiomatization of the properties of this modality with sacrifice in games with perfect information.
\end{abstract}



\section{Introduction}

In this paper we study ethical dilemmas faced by agents and coalitions of agents in multiagent systems. As an example, consider the two diagrams in Figure~\ref{alice bob dilemma figure}. In the situation depicted in the left diagram, an agent must choose between action left ($L$) and action right ($R$). These actions will result in the death of Alice and Bob, respectively. The right diagram adds an additional neutral action ($N$) that results in the system nondeterministically transitioning either in state $u$ or state $v$ and killing Alice or Bob, respectively.

\begin{figure}[ht]
\begin{center}
\scalebox{0.4}{\includegraphics{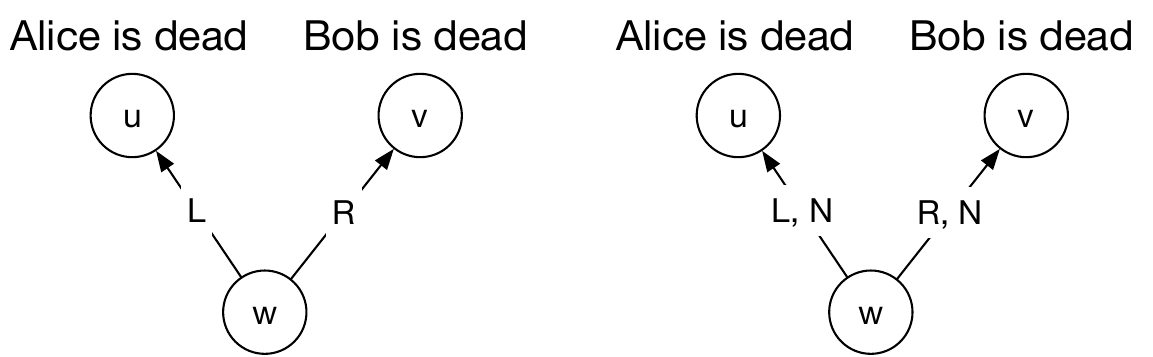}}
\caption{Two situations.}\label{alice bob dilemma figure}
\end{center}
\end{figure}

The situations represented by these two diagrams are similar in many respects. In both of them, in state $w$ the agent has a {\em strategy} to kill Alice (action $L$) and a strategy to kill Bob (action $R$). Additionally, in both settings, the agent will be {\em blamed} for the same outcomes. To claim this, we use an oft-cited~\cite{w17} definition of blameworthiness through the principle of alternative possibilities: ``a person is morally responsible for what he has done only if he could have done otherwise"~\cite{f69tjop}. For example, if the system transitions from state $w$ to state $u$ on either of the diagrams, then the agent will be blamed for the death of Alice because the agent had a strategy (action $R$) to prevent Alice's death. However, the agent is not blamable for the statement ``either Alice or Bob is dead'', because, in both diagrams, the agent does not have a strategy to prevent the statement from being true.

However, there is a difference in the types of choices the agent must make in these two diagrams. In the left diagram, the agent has to make a hard  choice between either consciously killing Alice or consciously killing Bob. On the right diagram, the agent can avoid this choice by selecting action $N$. We say that, on the left diagram, the agent is facing a moral dilemma between killing Alice and killing Bob, while on the second diagram the agent does not. 

\begin{figure}[ht]
\begin{center}
\scalebox{0.7}{\includegraphics{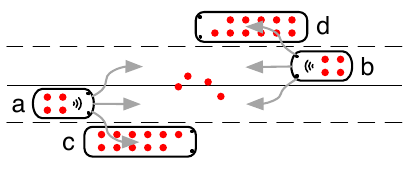}}
\caption{Road traffic situation.}\label{intro figure}
\end{center}
\end{figure}
As another example, consider the traffic situation depicted in Figure~\ref{intro figure}. Here, four pedestrians (red circles in the middle) are stranded on a busy four-lane highway. Self-driving cars $a$ and $b$ are on the path to run them over. It is too late for either of the cars to stop. Car $a$ has three options: to pull left, to keep driving straight, or to pull right into bus $c$. Similarly, car $b$ can pull left, drive straight, or pull right into bus $d$. 

\begin{table}[ht]
\begin{center}
\begin{tabular}{ c|ccc } 
 \toprule
 $a\backslash b$	 & left & straight & right \\ \midrule
 left & $\phi_p$ & $\phi_a\wedge\phi_b$ & $\phi_b\wedge\phi_d$ \\ 
 straight & $\phi_a\wedge\phi_b$ & $\phi_p$ & $\phi_b\wedge\phi_d$ \\ 
 right & $\phi_a\wedge\phi_c$ & $\phi_a\wedge\phi_c$ &  $\phi_a\wedge\phi_b\wedge\phi_c\wedge\phi_d$\\ 
\bottomrule
\end{tabular}
\end{center}
\caption{Strategic game between cars $a$ and $b$.}\label{intro table}
\end{table}

Table~\ref{intro table} shows different outcomes of this strategic game between players $a$ and $b$. In this table, letters $\phi_p$, $\phi_a$, $\phi_b$, $\phi_c$, and $\phi_d$ represent 
the death of the pedestrians, the passengers in car $a$, the passengers in car $b$, some of the passengers in bus $c$,
and
some of the passengers in bus $d$,
respectively.

In this situation car $a$ faces a choice: it can either pull right or not pull right. In the former case, it is {\em guaranteed} to kill its own passengers  as well as some of the passengers in bus $c$. In the latter case one of the following is {\em guaranteed} to happen: either pedestrians will die, cars $a$ and car $b$ will collide, or car $b$ and bus $d$ will collide. In other words, car $a$ is facing a dilemma between an action that will force $\phi_a\wedge\phi_c$ and the action that will force $\phi_p\vee(\phi_a\wedge\phi_b)\vee(\phi_b\wedge\phi_d)$. We denote this dilemma of car $a$ by formula
$$
[a:\phi_a\wedge\phi_c,\phi_p\vee(\phi_a\wedge\phi_b)\vee(\phi_b\wedge\phi_d)].
$$
Similarly, car $b$ is facing dilemma
\begin{equation}\label{b's dilemma}
    [b:\phi_p\vee(\phi_a\wedge\phi_b)\vee(\phi_a\wedge\phi_c),\phi_b\wedge\phi_d].
\end{equation}

Philosophers distinguish several approaches to morality. Consequential ethicists judge the moral acceptability of actions based on their outcomes. For example, a utilitarian (consequential) ethicist might say that it is morally unacceptable to kill more than a certain number of civil casualties in a military operation. On the other hand, absolute ethicists find certain actions morally unacceptable no matter what their results are. For example, a Kantian ethicist might object to pushing the lever in a trolley dilemma in order to sacrifice one person and save five. Many of such moral constraints can be modeled using the {\em cost of sacrifice} approach that we propose in this paper. We assign a cost of sacrifice to each action and specify the limit on the acceptable sacrifice for each agent as a subscript of the dilemma modality. For a utilitarian facing an ethical dilemma, the sacrifice is the number of civil casualties. For the absolute ethicist, sacrifice is $+\infty$ for all actions that are not morally acceptable.

The same approach can be used to model constraints imposed by laws, regulations, or company policies. For example, recently introduced German regulations for autonomous vehicles state that, when confronted with the choice between the death of a human being and damage to property, a self-driving car must always choose the latter~\cite{ec17fmtdi}. 
In this case,
cost of an action can be defined as the minimal number of people the action is guaranteed to kill above the unavoidable minimum. For example, if a hypothetical car is choosing between four actions that are guaranteed to kill $5$, $9$, $5$, and $7$ people respectively, then the costs of these actions are $0$, $4$, $0$, and $2$. The German rule would require a car to select one of the two actions with zero cost. 

According to Car and Driver magazine, Mercedes-Benz manager of driver assistance systems and active safety Christoph von Hugo stated that  
``If you know you can save at least one person, at least save that one. Save the one in the car. ... If all you know for sure is that one death can be prevented, then that’s your first priority.''~\cite{t16candd}. This potential policy for future Mercedes-Benz self-driving vehicles defines the cost of an action as the minimal number of people {\em inside the vehicle} the action is guaranteed to kill above the unavoidable minimum. The policy also sets the allowed sacrifice in terms of this cost to zero. 

Let us now assume that car $a$ (but not car $b$) in Figure~\ref{intro figure} is a self-driving vehicle made by Mercedes-Benz. Under the above policy\footnote{Mercedes-Benz later retracted this policy stating that ``to make a decision in favor of one person and thus against another is not legally permissible in Germany''~\cite{o16jalopnik}.}, the car will never choose to pull into bus $c$. Thus, car $a$ is now facing a vacuous one-option dilemma that any action that the car takes will result in statement $\phi_p\vee(\phi_a\wedge\phi_b)\vee(\phi_b\wedge\phi_d)$ being true. We write this as
$$
[a:\phi_p\vee(\phi_a\wedge\phi_b)\vee(\phi_b\wedge\phi_d)]_{a,b\mapsto 0,+\infty},
$$
where sacrifice function $a,b\mapsto 0,+\infty$ assigns the maximal sacrifice that each agent is ready to tolerate. In our case, the limit on the number of people inside the vehicle that car $a$ is ready to sacrifice is $0$. Car $b$ does not have any fixed sacrifice limit, which we interpret as the value of the sacrifice function for agent $b$ being $+\infty$. Note that although agent $b$ in this situation does not have a sacrifice limit, the limit on the sacrifice of agent $a$ modifies not only $a$'s dilemma but $b$'s as well. Compare the following statement to statement~(\ref{b's dilemma}):
$$
[b:\phi_p\vee(\phi_a\wedge\phi_b),\phi_b\wedge\phi_d]_{a,b\mapsto 0,+\infty}.
$$

If self-driving cars $a$ and $b$ decide to cooperate and make a joint decision, then instead of two individual dilemmas they face a single {\em multiagent} ethical dilemma. Let us first assume that neither of these two vehicles is a Mercedes-Benz. Thus, they can either (i) kill all pedestrians by driving in two different lanes, (ii) kill passengers in cars $a$ and $b$ by sending both vehicles for a head-on collision,  (iii) collide car $a$ with bus $c$, or (iv) collide car $b$ with bus $d$:   
$$
[a,b:\phi_p,\phi_a\wedge\phi_b,\phi_a\wedge\phi_c,\phi_b\wedge\phi_d]_{a,b\mapsto 
+\infty,+\infty}.
$$

Recall that if $a$ is a Mercedes-Benz car, then it is restricted from pulling right into bus $c$ because this action is guaranteed to kill passengers inside car $a$. Note, however, that, though there is always the chance that car $b$ pulls left and crashes into car $a$, there is no guarantee that car $a$ will collide with car $b$. Thus, the same Mercedes-Benz policy does not restrict car $a$ from pulling left. Let us now consider the case where both $a$ and $b$ are Mercedes-Benz vehicles making a joint decision. Does the policy restrict them from a joint decision under which car $a$ drives straight and car $b$ pulls left? In other words, is the policy a restriction on individual actions of Mercedes-Benz cars or a restriction on joint decisions of all Mercedes-Benz vehicles? If the former is true, as it is in the formal model described in this paper, then coalition $\{a,b\}$ is facing a dilemma between killing all pedestrians and a head-on collision:
$
[a,b:\phi_p,\phi_a\wedge\phi_b]_{a,b\mapsto 0,0}.
$
If the latter is true, then the two vehicles must either drive straight or both of them must pull left. In any case, the pedestrians will die:
$
[a,b:\phi_p]_{a,b\mapsto 0,0}.
$
Although Christoph von Hugo did not explicitly specify that this policy applies to individual vehicles, we think this is the case. If the policy were to apply to coalitions, then one might face a new version of the trolley dilemma when a fleet of Mercedes-Benz vehicles might choose to sacrifice the life of a passenger in a Mercedes-Benz vehicle in order to safe the lives of two passengers in another Mercedes-Benz vehicle. This seems to contradict von Hugo's claim that the first priority should be the prevention of even one death of a passenger in a Mercedes-Benz self-driving vehicle. 



\section{Overview}

The rest of this paper is organized as follows. First, we describe the syntax and formal semantics of the ethical dilemma modality $[C\!:\!\phi_1,\dots,\phi_n]_s$ in a strategic game setting. Then, we review literature on ethical dilemmas and compare the dilemma modality to the earlier studied blameworthiness and coalition power modalities. In particular, we show that the dilemma modality cannot be defined through the blameworthiness modality even in the single-agent setting without sacrifice. We also demonstrate how our definition of ethical dilemma can be extended to games with imperfect information. Finally, we give a complete axiomatization of our modality in the perfect information case. The proof of completeness is in the full version of this paper~\cite{ny19arxiv}.



\section{Strategic Game with Normalized Costs}

Recall from the introduction that if an autonomous vehicle is confronted with the choice between four actions that are guaranteed to kill $5$, $9$, $5$, and $7$ people respectively, then the costs of these actions are $0$, $4$, $0$, and $2$. In other words, we assume that costs are ``normalized'' so that at least one of them is zero.


\begin{definition}\label{normalized definition}
A function $f:X\to[0,+\infty]$ is normalized if there is an element $x\in X$ such that $f(x)=0$.
\end{definition}

The strategic games with normalized costs that we define bellow are very similar to ``resource-bounded action frames'' used in the semantics of Resource Bounded Coalition Logic~\cite{alnr11jlc}. By $X^Y$ we denote the set of all functions from set $Y$ to set $X$. Throughout the paper we assume a fixed set of propositional variables and a fixed set of agents $\mathcal{A}$.

\begin{definition}\label{game}
A game is a tuple $(W,M,\Delta,|\cdot|,\pi)$, where
\begin{enumerate}
    \item $W$ is a set of states,
    \item $\Delta$ is a set of ``actions'',
    \item $M\subseteq W\times \Delta^\mathcal{A}\times W$ is a relation, called ``mechanism'',
    \item $|d|_w^a\in [0,+\infty]$ is the ``cost'' of action $d\in \Delta$ for $a\in\mathcal{A}$ in state $w\in W$, such that $|d|_w^a$ is {\bf \em normalized} as a function of action $d$ for any fixed values $a\in\mathcal{A}$ and $w\in W$,
    \item $\pi(p)$ is a subset of $W$ for each propositional variable $p$.
\end{enumerate}
\end{definition}

We refer to functions in set $\Delta^\mathcal{A}$ as complete action profiles of the game. Informally, mechanism $M$ captures the rules of the game. Namely, $(w,\delta,u)\in M$ if under complete action profile $\delta$ the game can transition from state $w$ to state $u$. Our semantics is slightly more general than in ~\cite{alnr11jlc} because we assume that mechanism is a relation and not necessarily a function. In other words, we allow a complete action profile to transition the game into one of several different states. 
Our approach also allows some complete action profiles to result in no next state at all. We interpret this as a termination of the game. We normalize the costs of actions in order to avoid a situation when, for a given sacrifice, an agent would not have any actions to choose from.
Note that Definition~\ref{game} allows actions with infinite costs. We further discuss such actions in the conclusion.

\section{Syntax}\label{Syntax section}

In this paper we assume a fixed set $\mathcal{A}$ of agents. By a coalition we mean any nonempty subset of $\mathcal{A}$.  By a sacrifice function we mean an arbitrary function from set $\mathcal{A}$ to set $[0,+\infty]$. It represents the maximal cost of the sacrifice that each individual agent is ready to bear. 

The language $\Phi$ of our logical system is defined by the grammar
$
\phi:= p \;|\; \neg\phi \;|\; \phi\to\phi \;|\; [C\!:\!X]_s,
$
where $C$ is a coalition, $X$ is a nonempty finite set of formulae, and $s$ is a sacrifice function. We read $[C\!:\!X]_s$ as ``coalition $C$ under sacrifice constraints defined by function $s$ has a dilemma between consciously forcing one of the statements in set $X$ to be true''. For the sake of simplicity, we abbreviate  $[C\!:\!\{\phi_1,\dots,\phi_n\}]_s$ as $[C\!:\!\phi_1,\dots,\phi_n]_s$. We assume that Boolean connectives $\wedge$ and $\vee$ as well as constants truth $\top$ and false $\bot$ are defined as usual. By $\wedge X$ and $\vee X$ we denote the conjunction and the disjunction of all formulae in $X$ respectively. As usual, $\wedge\varnothing$ and $\vee\varnothing$ are defined to be $\top$ and $\bot$, respectively.

\section{Semantics}\label{semantics section}

Throughout this paper, we write $f=_Xg$ if $f(x)=g(x)$ for each $x\in X$.  
We also use shorthand notation captured in the following definition.
\begin{definition}\label{sacrifice notation}
For any game, any complete action profile $\delta$, any state $w$, and any sacrifice function $s$, we write $|\delta|_w\le s$ if $|\delta(a)|_w^a\le s(a)$ for each agent $a\in\mathcal{A}$.
\end{definition}

By a strategy of a coalition $C$ in a given game we mean any function from the set $\Delta^\mathcal{A}$ that assigns an action to each member of the coalition.

Now, we introduce a key definition of this paper. Its part (4) specifies the formal meaning of the multiagent dilemma modality $[C\!:\!X]_s$. Item 4(a) states that any strategy of coalition $C$ forces a specific statement $\phi\in X$ to be true. Item 4(b) states that $X$ is a minimal set with such property.

\begin{definition}\label{sat}
For each game $(W,\Delta,|\cdot|,M,\pi)$, each state $w\in W$, and each formula $\phi\in\Phi$, the satisfaction relation $w\Vdash\phi$ is defined recursively:
\begin{enumerate}
    \item $w\Vdash p$, if $w\in \pi(p)$, where $p$ is a propositional variable,
    \item $w\Vdash\neg\phi$, if $w\nVdash\phi$,
    \item $w\Vdash\phi\to\psi$, if $w\nVdash\phi$ or $w\Vdash\psi$,
    \item $w\Vdash [C\!:\!X]_s$, if 
    \begin{enumerate}
        \item for any strategy $t\in\Delta^C$ of coalition $C$ there is a formula $\phi\in X$ such that for any action profile $\delta\in\Delta^\mathcal{A}$ and any state $u\in W$ if $|\delta|_w\le s$, $t=_C\delta$, and $(w,\delta,u)\in M$, then $u\Vdash\phi$,
        \item for any nonempty subset $Y\subsetneq X$ 
        there is a strategy $t\in\Delta^C$ of coalition $C$ such that for any formula $\phi\in Y$ there is an action profile $\delta\in\Delta^\mathcal{A}$ and a state $u\in W$ where $|\delta|_w\le s$, $t=_C\delta$, $(w,\delta,u)\in M$, and $u\nVdash\phi$.
    \end{enumerate}
\end{enumerate}
\end{definition}

We added the minimality condition 4(b) to the above definition in order to eliminate arbitrary irrelevant alternatives being added to set $X$. We believe that with this condition the definition better reflects our intuition of what a dilemma is. Without item 4(b) the definition would capture the notion of {\em weak dilemma} that we discuss later.

Recall that we allow a game to terminate as a result of agents' actions.
For example, suppose that in a state $w$ an agent $a$ has three actions $d_1$, $d_2$, $d_3$ all of which have a cost of $1$. Let action $d_1$ transition the system into a state in which statement $\phi_1$ is true, action $d_2$ transition the system into a state in which statement $\phi_2$ is true, and action $d_3$ be an action that terminates the game. Then, $w\Vdash [a\!:\!\phi_1,\phi_2]_{a\mapsto 1}$ is true, because each action of agent $a$ predetermines a specific $\phi_i$ to be true in each outcome state. In other words, being able to terminate the system does not provide a way for an agent to ``escape'' the dilemma.

We allow set $X$ in statement $[C\!:\!X]_s$ to be singleton. In such a case, $[C\!:\!X]_s$ is not a dilemma in the common sense of the world, but a ``necessary'' modality. 

\section{Literature Review}

The dilemmas that we study in this paper are usually referred to in the literature as variations of the ``trolley dilemma''. The original trolley dilemma is proposed in~\cite{f67or} as a dilemma faced by an agent who must choose between allowing five people to die and killing one person to prevent the death of those five. 
The distinction between letting one die and killing someone is also emphasised in~\cite{t76m,t84ylj} as well as in~\cite{bb14philosophia}. 
Navar-rete et al. study the same distinction in a virtual reality environment~(\citeyear{nmma12emotion}). 

At the same time, others shift the focus of the trolley dilemma away from the distinction between letting things happen and making things happen. 
Marczyk  and  Marks empirically study whether perceived moral permissibility changed when the person making a decision in the trolley dilemma stands to benefit from or be harmed by one of the outcomes~(\citeyear{mm14ehb}).
Pan and Slater analyse participants' ethical reasoning when they were confronted with the trolley dilemma through an online survey versus through immersive virtual realities~(\citeyear{ps11hci}).
Chen et al. examine the differences in brain activity of Chinese undergraduates who experienced the great Sichuan earthquake when confronted with  trolley dilemmic situations where they must choose to rescue one of two relatives and one of two strangers~(\citeyear{cqlz09bp}).
Indick et al. investigate how the gender of a person affects the decision that she makes in the trolley dilemma-like settings~(\citeyear{ikos00crsp}).
Bleske-Rechek et al. observe that people are less likely to sacrifice the life of one person for the lives of five if the one person is young, a genetic relative, or a current romantic partner~(\citeyear{bnbrb10jsecp}).
In a related work, Kawai, Kubo, and Kubo-Kawai show that most people are inclined to sacrifice an older person over a younger one~(\citeyear{kkk14jpr}).
In this paper, we also consider trolley-like dilemmas in this broader sense. 

Although we are not aware of any works treating dilemma as a modality, there are papers that use existing logical formalism to capture ethical dilemmas. 
Berreby, Bourgne, and Ganascia use simplified event calculus to model dilemmas within answer set programming~(\citeyear{bbg15lpar}).
Horty suggests using nonmonotonic logic for reasoning about moral dilemmas~(\citeyear{h94jpl}).
Bonnemains, Saurel, and Tessier propose formal notations for capturing different ethical norms that can be used in dilemmic settings~(\citeyear{bst18eit}). 

Finally, in this paper we use the cost of a sacrifice as a {\em constraint} on agent's available actions. In a related work, Halpern and Kleiman-Weiner propose to use the cost of a sacrifice as a {\em degree} of blameworthiness~(\citeyear{hk18aaai}).

\section{Ethical Dilemma vs Blameworthiness}

In this section we compare the ethical dilemma modality with blameworthiness modality. We show that the notion of ethical dilemma proposed in this paper cannot be expressed through blameworthiness, as defined through the principle of alternative possibilities: ``a person is morally responsible for what he has done only if he could have done otherwise"~\cite{f69tjop}. In other words, we say that an agent (or a coalition of agents) is responsible for statement $\phi$ if $\phi$ is true and the agent had a strategy to prevent $\phi$. Several formal semantics for blameworthiness as a modality have been proposed in~\cite{nt19aaai,nt20aaai,nt20ai}.

The ethical dilemma modality, just like most other modalities in logic, captures a property of a state. Blameworthiness is not a property of a state, but rather of a transition between states:  statement $\phi$ is true at a {\em current} state $u$, but the agent had a strategy to prevent it in the {\em previous} state $w$. As a result, if the language contains blameworthiness modality, the definition of satisfaction relation $\Vdash$ given in Definition~\ref{sat} should be modified to be a ternary relation $(w,u)\Vdash\phi$ between two states and a formula.

The goal of this section is to show that the dilemma modality cannot be defined through blameworthiness modality. To do this, we first translate the definition of ethical dilemma given in Definition~\ref{sat} into the setting of the two-state satisfaction relation $(w,u)\Vdash\phi$. While doing this, we omit the sacrifice subscript,  assume that the set of agents $\mathcal{A}$ contains a single {\em fixed} agent $a$, and the set of propositional variables contains a single variable $p$. We do this with the intent to show a {\em stronger} result that the dilemma modality is not expressible through blameworthiness modality even in this simple case. In this single-agent setting, we denote coalition strategies and action profiles simply by the action of that fixed agent $a$.  

In this section we consider language $\Phi_0$   described by the grammar
$
\phi:= p \;|\; \neg\phi \;|\; \phi\to\phi \;|\; [a\!:\!X] \;|\; \B_a\phi,
$
where $X$ is any nonempty finite set of formulae in the language $\Phi_0$. We read $\B_a\phi$ as ``agent $a$ is blamable for $\phi$''. The formal semantics for this language is given  below.

\begin{definition}\label{sat-play} 
For each game $(W,\Delta,M,\pi)$, any states $w,u\in W$, and each formula $\phi\in\Phi_0$, the satisfaction relation $(w,u)\Vdash\phi$ is defined recursively:
\begin{enumerate}
    \item $(w,u)\Vdash p$, if $u\in \pi(p)$,
    \item $(w,u)\Vdash\neg\phi$, if $w\nVdash\phi$,
    \item $(w,u)\Vdash\phi\to\psi$, if $w\nVdash\phi$ or $w\Vdash\psi$,
    \item $(w,u)\Vdash [a\!:\!X]$, if 
    \begin{enumerate}
        \item for any action $t\in \Delta$ there is $\phi\in X$ such that for any state $u'\in W$ if $(w,t,u')\in M$, then $(w,u')\Vdash\phi$,
        \item for any nonempty set $Y\subsetneq X$ 
        there is an action $t\in\Delta$ such that for any formula $\phi\in Y$ there is a state $u'\in W$ where $(w,t,u')\in M$, and $(w,u')\nVdash\phi$,
    \end{enumerate}
    \item $(w,u)\Vdash \B_a\phi$, if $(w,u)\Vdash \phi$
    and there is $t\in \Delta$ such that for any state $u'\in W$ if $(w,t,u')\in M$, then $(w,u')\nVdash\phi$.
\end{enumerate}
\end{definition}
Note that items 1 through 4 above are straightforward modifications of corresponding items in Definition~\ref{sat} for a single-agent no-sacrifice language $\Phi_0$. Item 5 captures the principle  of  alternative  possibilities in the same way as in~\cite{nt19aaai}. 

In addition to language $\Phi_0$, we also consider a fragment $\Phi_0^{\mhyphen[~]}$ of $\Phi_0$ that does not use the ethical dilemma modality.


\begin{figure}[ht]
\begin{center}
\scalebox{0.4}{\includegraphics{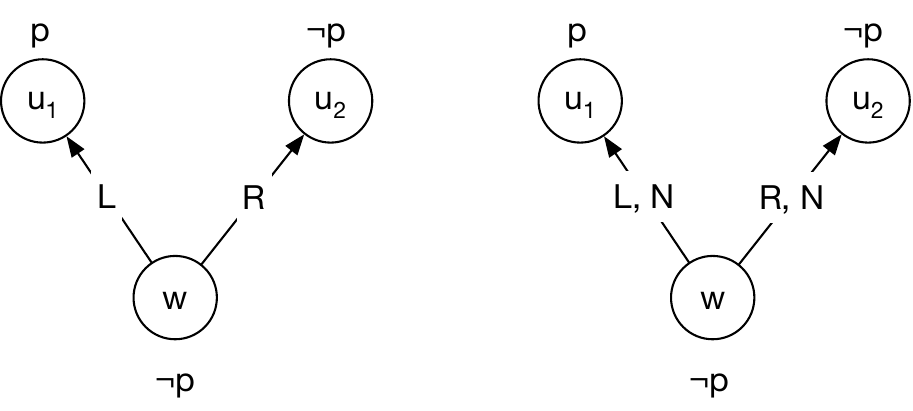}}
\caption{A game.}\label{ethical dilemma figure}
\end{center}
\end{figure}

To show that ethical dilemma modality cannot be defined through the blameworthiness modality, we construct two single-player games that are indistinguishable in language $\Phi_0^{-[~]}$ but are distinguishable in language $\Phi_0$. The two games  are depicted in Figure~\ref{ethical dilemma figure}. These are essentially the same as in our introductory example in Figure~\ref{alice bob dilemma figure}. We will refer to these games as ``left'' and ``right'' games. Both games have three states: $w$, $u_1$, and $u_2$. In both games, propositional variable $p$ is true in state $u_1$ only. In other words, $\pi_l(p)=\{u_1\}=\pi_r(p)$, where $\pi_l$ and $\pi_r$ are valuation functions for the left and the right games respectively. The set of actions in the left game consists of two actions: $L$ and $R$. The right game includes action $N$ in addition to actions $L$ and $R$. The mechanisms $M_l$ and $M_r$ of the left  and the right games respectively are shown in the diagrams using directed edges. For example, the edge from state $w$ to state $u_1$ is labeled with action $L$ on both diagrams. This means that $(w,L,u_1)\in M_l$ and $(w,L,u_1)\in M_r$. We will refer to the satisfaction relations for the left and the right games as $\Vdash_l$ and $\Vdash_r$ respectively.

The next lemma shows that the left and the right games are not distinguishable in language $\Phi_0^{\mhyphen[~]}$.

\begin{lemma}\label{l iff r lemma}
$(w,u)\Vdash_l \phi$ iff $(w,u)\Vdash_r \phi$ for any state $u\in\{u_1,u_2\}$ and formula $\phi\in\Phi_0^{\mhyphen[~]}$. 
\end{lemma}
\begin{proof}
We prove the statement of the lemma by structural induction on formula $\phi$. To prove the statement in case when formula $\phi$ is propositional variable $p$, note that $\pi_l(p)=\{u_1\}=\pi_r(p)$, see Figure~\ref{ethical dilemma figure}. Thus, $(w,u_1)\Vdash_l p$ iff $(w,u_1)\Vdash_l p$ by item 1 of Definition~\ref{sat-play}.

If $\phi$ is a negation or an implication, the desired follows from the induction hypothesis and items 2 and 3 of Definition~\ref{sat-play} in the standard way. Suppose now that formula $\phi$ has the form $\B_a\psi$.

\noindent$(\Rightarrow):$ Let $(w,u)\Vdash_l \B_a\psi$. Thus, by item 5 of Definition~\ref{sat-play}, 
\begin{equation}\label{august02-psi}
    (w,u)\Vdash_l \psi
\end{equation}
and there is an action $t\in\{L,R\}$ such that 
$ 
(w,u')\nVdash_l \psi
$
for any state $u\in \{u_1,u_2\}$ such that $(w,t,u)\in M_l$.
Observe that $\{(w,t,u)\in M_r\;|\; t\in\{L,R\}\}=M_l$, see Figure~\ref{ethical dilemma figure}. Thus, 
$ 
(w,u')\nVdash_l \psi
$
for any state $u\in \{u_1,u_2\}$ such that $(w,t,u)\in M_r$. 
Hence, by the induction hypothesis,
$ 
(w,u')\nVdash_r \psi
$
for any state $u\in \{u_1,u_2\}$ such that $(w,t,u)\in M_r$. 
At the same time, also by the induction hypothesis, statement~(\ref{august02-psi}) implies that  $(w,u)\Vdash_r \psi$. Therefore, $(w,u)\Vdash_r \B_a\psi$ by item 5 of Definition~\ref{sat-play}.

\noindent$(\Leftarrow):$ Assume that $(w,u)\Vdash_r \B_a\psi$. Thus, by item 5 of Definition~\ref{sat-play}, 
\begin{equation}\label{august02-psi2}
    (w,u)\Vdash_r \psi
\end{equation}
and there is an action $t\in\{L,N,R\}$ such that
$ 
(w,u')\nVdash_r \psi
$
for any state $u\in \{u_1,u_2\}$ such that $(w,t,u)\in M_r$. If $t\neq N$, then the prove is similar to the one for the case $(\Rightarrow)$. 

Assume now that $t=N$. In other words,
$ 
(w,u')\nVdash_r \psi
$
for any state $u\in \{u_1,u_2\}$ such that $(w,N,u)\in M_r$. Hence, 
$ 
(w,u')\nVdash_r \psi
$
for any state $u\in \{u_1,u_2\}$,
see Figure~\ref{ethical dilemma figure}.
Thus, by the induction hypothesis,
$ 
(w,u')\nVdash_l \psi
$
for any state $u\in \{u_1,u_2\}$.
In particular,
$ 
(w,u')\nVdash_l \psi
$
for any state $u\in \{u_1,u_2\}$ such that $(w,L,u)\in M_l$.
At the same time, by the induction hypothesis,
statement~(\ref{august02-psi2}) implies that
$(w,u)\Vdash_l \psi$.
Therefore, $(w,u)\Vdash_l \B_a\psi$ by item 5 of Definition~\ref{sat-play}.
\end{proof}

The next two lemmas show that the left and the right models are distinguishable in the language that contains ethical dilemma modality.

\begin{lemma}\label{l lemma}
$(w,u)\Vdash_l[a\!:\!p,\neg p]$ for any state $u\in \{u_1,u_2\}$.
\end{lemma}
\begin{proof}
We verify the two conditions of item 4 of Definition~\ref{sat-play} separately.

\noindent{\em Condition (a):} Consider any $t\in \{L,R\}$. Without loss of generality, let $t=L$. Consider any state $u'\in \{w,u_1,u_2\}$ where $(w,L,u')\in M_l$. To verify the condition, it suffices to show that $(w,u')\Vdash_l p$.

Indeed, assumption $(w,L,u')\in M_l$ implies  $u'=u_1$, see Figure~\ref{sat-play}. Thus, $u'\in \pi_l(p)$, see Figure~\ref{sat-play}. Then, $(w,u')\Vdash_l p$ by item 1 of Definition~\ref{sat-play}.

\noindent{\em Condition (b):} Consider any nonempty set $Y\subseteq\{p,\neg p\}$. Without loss of generality, assume that $Y=\{p\}$. Let $t=R$. To verify the condition, it suffices to prove that there is a state $u'\in \{w,u_1,u_2\}$ such that $(w,t,u')\in M_l$ and $(w,u')\nVdash p$.
Indeed, $u_2\notin \pi_l(p)$, see Figure~\ref{sat-play}. Thus, $u_2\nVdash_l p$ by item 1 of Definition~\ref{sat-play}. At the same time, $(w,R,u_2)\in M_l$, see Figure~\ref{sat-play}.
\end{proof}

\begin{lemma}\label{r lemma}
$(w,u)\nVdash_r[a\!:\!p,\neg p]$ for any state $u\in \{u_1,u_2\}$.
\end{lemma}
\begin{proof}
We will show that condition 4(a) of Definition~\ref{sat-play} does not hold. Indeed, consider strategy $t=N$ and any formula $\phi\in\{p,\neg p\}$. To show that the condition does not hold, it suffices to find state $u'\in \{w,u_1,u_2\}$ such that $(w,N,u')\in M_r$ and $u'\nVdash_r \phi$. Without loss of generality, let $\phi= p$.
Note that $u_2\notin \pi_r(p)$, see Figure~\ref{sat-play}. Thus, $u_2\nVdash_r p$ by item 1 of Definition~\ref{sat-play}. At the same time $(w,N,u_2)\in M_r$, see Figure~\ref{sat-play}.
\end{proof}

The next theorem follows the three previous lemmas.

\begin{theorem}\label{undefinability theorem}
Ethical dilemma modality $[~]$ is not definable in language $\Phi_0^{\mhyphen[~]}$. \qed
\end{theorem}






\section{Ethical Dilemma vs Coalition Power}

Marc Pauly proposed a logic of coalition power that captures properties of modality ``coalition $C$ has a strategy to achieve $\phi$''~\cite{p01illc,p02}. His approach has been widely studied in the literature~\cite{g01tark,vw05ai,b07ijcai,sgvw06aamas,abvs10jal,avw09ai,b14sr,gjt13jaamas,nr18kr}. 
\citeauthor{ahk02} introduced Alternating-Time Temporal Logic (ATL) that combines temporal and coalition modalities~(\citeyear{ahk02}). \citeauthor{gd06tcs} gave a complete axiomatization of ATL~(\citeyear{gd06tcs}). 
\cite{alnr11jlc} introduce resource-bounded coalitional logic (RBCL). 
A logical system with a modality labeled by budget and profit is introduced in~\cite{cn17ijcai}. 

The dilemma modality $[C\!:\!X]_s$, even without the sacrifice subscript $s$, cannot be expressed in the original logic of coalition power. This can be shown using the same two models from Figure~\ref{ethical dilemma figure} that we used to prove Theorem~\ref{undefinability theorem}. However, this modality, {\em without the sacrifice subscript}, can be expressed via {\em socially friendly coalition power} modality introduced in~\cite{ge18aamas}. Its authors proposed several versions of socially friendly modality. The basic one, $[C](\phi;\psi_1,\dots,\psi_n)$ stands for ``coalition $C$ has an action profile that guarantees $\phi$ and enables the complementary coalition $\overline{C}$ to realize any one of $\psi_1,\dots,\psi_k$ by a suitable action profile''. Our modality $[C\!:\!\phi_1,\dots,\phi_n]$ without the sacrifice function is expressible through socially friendly modality as
$
[C](\top;\phi_1,\dots,\phi_n)\wedge
\bigwedge_{D\subsetneq C}\neg [D](\top;\phi_1,\dots,\phi_n). 
$



Unlike ours, the logical system proposed in~\cite{ge18aamas} does not consider cost of actions. Thus, our modality $[C\!:\!X]_s$ with the sacrifice function $s$ is not expressible in their system. They sketch the proof that their axiomatization of socially friendly modality is complete, but, unlike us, do not claim strong completeness. The completeness proofs here and in~\cite{ge18aamas} use different constructions -- see our discussion in~\cite{ny19arxiv}. Additionally, none of the axioms in~\cite{ge18aamas} is similar to our main axiom, the Combination axiom.
Also, recall that the mechanism in  Definition~\ref{game} is nondeterministic. This means that statement $[C\!:\!\phi_1,\dots,\phi_2]$ does {\em not} imply that the complement of coalition $C$ has a strategy to force each of the statements $\phi_1,\dots,\phi_2$.  Goranko and Enqvist's statement $[C](\top,\phi_1,\dots,\phi_n)$ does imply this.

\section{Ethical Dilemma and Imperfect Information}

Recall our introductory example in which an agent is facing a dilemma because she has to make a hard choice between consciously killing Alice and consciously killing Bob. As we discuss there, the agent does not face a dilemma if she can avoid the hard choice by using action $N$ and leaving the outcome up to chance. The other case when the agent does not have to make a hard choice between consciously killing Alice and consciously killing Bob is when she is {\em unaware} of the possible outcomes of her actions. 

\begin{figure}[ht]
\begin{center}
\scalebox{0.4}{\includegraphics{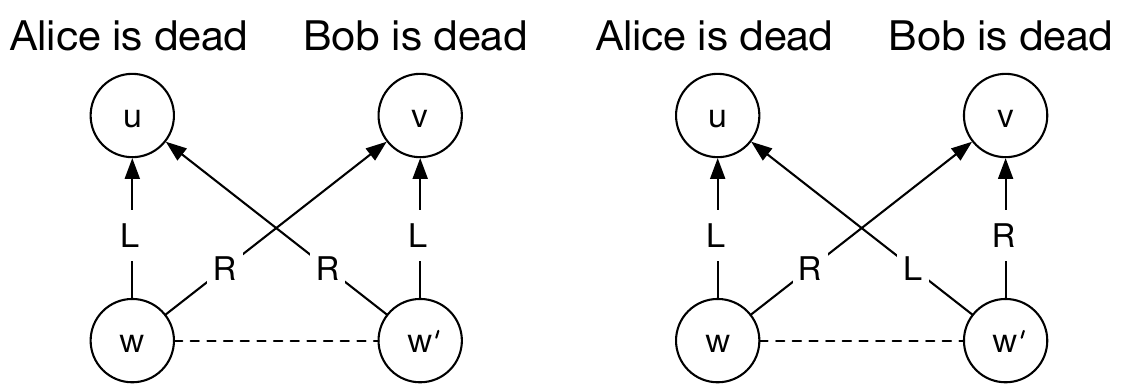}}
\caption{Two settings with imperfect information.}\label{imperfect information dilemma figure}
\end{center}
\end{figure}

Consider, for example, the {\em left} diagram in Figure~\ref{imperfect information dilemma figure}. This diagram depicts an imperfect information game with states $w$ and $w'$ indistinguishable to the agent. In state $w$ the agent has a choice between action $L$ and action $R$. The first of these actions results in Alice's death, the second in Bob's death. However, the agent does not know which action results in whose death because she cannot distinguish state $w$ from state $w'$ where the same actions have the opposite effect. Thus, by choosing one of the two actions in state $w$, the agent does not make a hard choice between consciously killing Alice and consciously killing Bob. We say that she does not face a dilemma in this setting. At the same time, the agent does face a dilemma in the setting depicted in the {\em right} diagram in Figure~\ref{imperfect information dilemma figure} because in both indistinguishable states the actions lead to the same outcome.

To formally define ethical dilemma modalities in imperfect information game settings, one needs to add an indistinguishability equivalence relation $\sim_a$ between states to Definition~\ref{game} of the game. Furthermore, because this definition allows costs of actions to vary from state to state, we need to assume that the cost of the action to an agent $a$ is the same in all $a$-indistinguishable states. In other words, we need to assume that the cost of the action is {\em known} to the agent. 

After the above changes are done to Definition~\ref{game}, one can modify item 4 of Definition~\ref{sat} to capture ethical dilemma in imperfect information setting as as shown below. We write $w\sim_C u$ if $w\sim_a u$ for all agents $a\in C$.

\begin{definition}
For each game $(W,\{\sim_a\}_{a\in\mathcal{A}},\Delta,|\cdot|,M,\pi)$ with imperfect information, each state $w\in W$, and each formula $\phi\in\Phi$, the satisfaction relation $w\Vdash\phi$ is defined recursively:
\begin{enumerate}
    \item[4.] $w\Vdash [C\!:\!X]_s$, if 
    \begin{enumerate}
        \item for any strategy $t\in\Delta^C$ of coalition $C$ there is a formula $\phi\in X$ such that for any action profile $\delta\in\Delta^\mathcal{A}$ and any states $w',u\in W$ if $w\sim_C w'$, $|\delta|_w\le s$, $t=_C\delta$, and $(w',\delta,u)\in M$, then $u\Vdash\phi$,
        \item for any nonempty subset $Y\subsetneq X$ 
        there is a strategy $t\in\Delta^C$ of coalition $C$ such that for any formula $\phi\in Y$ there is an action profile $\delta\in\Delta^\mathcal{A}$ and states $w',u\in W$ where $w\sim_C w'$, $|\delta|_w\le s$, $t=_C\delta$, $(w',\delta,u)\in M$, and $u\nVdash\phi$.
    \end{enumerate}
\end{enumerate}
\end{definition}

Later in this paper we propose a sound and complete logical system for ethical dilemma modality with sacrifice in a perfect information setting. A logical system that describes an interplay between distributed knowledge and blameworthiness in an imperfect information setting is introduced in~\cite{nt20ai}. We leave the development of a similar system for knowledge and dilemmas for the future.

\section{Weak Dilemma}

In the next section we state the axioms of our logical system that capture the properties of modality $[C\!:\!X]_s$. When stating these axioms, it will be convenient to define $\[C\!:\!X\]_s$ as an abbreviation for formula 
$
\bigvee_{\varnothing\neq Z\subseteq X}[C\!:\!Z]_s
$. 
In other words, $\[C\!:\!X\]_s$ means that each action profile of coalition $C$ forces a specific formula in set $X$ to be true, but set $X$ is {\em not necessarily} a minimal such set. We call expression $\[C\!:\!X\]_s$ a {\em weak dilemma}. Alternatively, $\[C\!:\!X\]_s$ could be defined by omitting condition 4(b) from Definition~\ref{sat}.

\section{Axioms}\label{Axioms section}

In this section we list and discuss the axioms and inference rules of our logical system. The first of these axioms uses the notation $X\otimes Y$. For any two sets of formulae $X$ and $Y$, let $X\otimes Y$ be the set of formulae $\{\phi\wedge\psi\;|\; \phi\in X, \psi\in Y\}$.

In addition to propositional tautologies in language $\Phi$, our logical system contains the following axioms:

\begin{enumerate}
    \item Combination:
    $[C\!:\!X]_s\to ([C\!:\!Y]_s\to \[C\!:\!X\otimes Y\]_s)$,
    \item Monotonicity: $[C\!:\!X]_{s'} \to \[D\!:\!X\]_s$, where $C\subseteq D$ and $s\le s'$,
    \item Minimality: $[C\!:\!X]_{s} \to \neg [C\!:\!Y]_s$, where $Y\subsetneq X$,
    \item No Alternatives: $[C\!:\!X]_s\to [D\!:\!X]_s$, where  $|X|=1$.
\end{enumerate}

We write $\vdash\phi$ if formula $\phi\in\Phi$ is derivable in our logical system using the Modus Ponens, the Necessitation, and the Substitution inference rules
$$
\dfrac{\phi,\;\;\;\;\phi\to\psi}{\psi}
\hspace{8mm}
\dfrac{\phi}{[C\!:\!\phi]_s}
\hspace{8mm}
\dfrac{\{\phi\to \tau(\phi)\;|\; \phi\in X\}}{[C\!:\!X]_s\to \[C\!:\!\tau(X)\]_s},
$$
for each function $\tau$ that maps set $\Phi$ into set $\Phi$. If $\vdash\phi$, then we say that formula $\phi$ is a theorem of our system. We write $X\vdash\phi$ if formula $\phi$ is provable from all theorems of our logical system and an additional set of formulae $X$ using the Modus Ponens inference rule only.

The Combination axiom states that if each action profile of coalition $C$ forces a specific formula in set $X$ to be true and a specific formula in set $Y$ to be true, then each action profile of coalition $C$ forces a specific formula in set $X\otimes Y$ to be true. Indeed, if a particular action profile forces $\phi\in X$ to be true and $\psi\in Y$ to be true, then this profile also forces $\phi\wedge\psi$ to be true. 
A hypothetical Combination axiom with the single bracket modality in the conclusion is not sound.



The Monotonicity axiom states that if each action profile of coalition $C$ forces a specific formula in set $X$ to be true under a more relaxed constraint $s'$ on sacrifice, then each action profile of a larger coalition $D$ forces a specific formula in set $X$ to be true under a stronger constraint $s$. A hypothetical Monotonicity axiom with single bracket modality in the conclusion is also not sound. 
The Minimality axiom captures the minimality requirement of item 4(b) in Definition~\ref{sat}.

The No Alternatives axiom deals with the extreme case of a singleton set $X=\{\phi\}$. Note that statement $[C\!:\!\phi]_s$ means that statement $\phi$ is predetermined to be true under any action profile of coalition $C$ as long as actions of {\em all agents } are constrained by $s$. In other words, $\phi$ is true as long as actions of all agents  are constrained by $s$. Since the last statement does not depend on the coalition $C$, we may conclude that validity of statement $[C\!:\!\phi]_s$ does not depend on the choice of coalition $C$. This observation is captured in the No Alternatives axiom.

The Necessitation rule states that if formula $\phi$ is true in all states of all games, then statement $\phi$ is predetermined to be true under any action profile of coalition $C$ and any constraint $s$. Note that in this case the minimality condition 4(b) of Definition~\ref{sat} is vacuously satisfied because singleton set $\{\phi\}$ has no nonempty proper subsets.

The Substitution rule says that if   $[C\!:\!X]_s$ and statement $\phi$ in set $X$ is replaced with a logically weaker statement $\tau(\phi)$, then each action profile of coalition $C$ still forces a specific formula in the set $\tau(X)$ to be true, but $\tau(X)$ is not necessarily the smallest such set. An example of an instance of this rule is
$$
\dfrac{\neg\neg\phi\to\phi,\hspace{5mm} \psi\to(\chi\to\psi)}
{[C\!:\!\neg\neg\phi, \psi]_s\to\[C\!:\!\phi,\chi\to\psi\]_s}.
$$
Note that $X$ and $\tau(X)$ are sets, not lists. Thus,
set $\tau(X)$ might have fewer elements than set $X$:
$$
\dfrac{\phi\to(\phi\vee\psi),\hspace{5mm} \psi\to(\phi\vee\psi)}
{[C\!:\!\phi,\psi]_s\to\[C\!:\!\phi\vee\psi\]_s}.
$$


\begin{theorem}[strong soundness]
If $X\vdash\phi$ and $w$ is a state of a model such that $w\Vdash\chi$ for each formula $\chi\in X$, then $w\Vdash\phi$. \qed
\end{theorem}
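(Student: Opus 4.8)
The plan is to prove strong soundness by the standard route: first establish the soundness of each axiom and each inference rule individually (with respect to the satisfaction relation of Definition~\ref{sat}), and then lift this to the claimed statement about $X\vdash\phi$ by induction on the length of a derivation from $X$. Since $X\vdash\phi$ uses only Modus Ponens on top of theorems, the inductive step for derivations from $X$ is immediate once we know every theorem is valid in every state of every game; so the real content is the validity of the theorems, which in turn reduces to soundness of the axioms and of the three inference rules.

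First I would dispatch the easy pieces. Propositional tautologies and Modus Ponens are routine. For the Necessitation rule, if $\phi$ is valid everywhere then condition 4(a) holds trivially for $X=\{\phi\}$, and condition 4(b) is vacuous because $\{\phi\}$ has no nonempty proper subset, so $[C\!:\!\phi]_s$ is valid. The Minimality axiom is essentially a restatement of clause 4(b): if $w\Vdash[C\!:\!X]_s$ and $\varnothing\neq Y\subsetneq X$, then clause 4(b) applied to $Y$ produces a strategy witnessing $w\nVdash[C\!:\!Y]_s$ via the failure of clause 4(a) for $Y$. The No Alternatives axiom uses the observation, noted in the text, that for a singleton $X=\{\phi\}$ clause 4(a) says precisely that $\phi$ holds in every $s$-bounded successor of $w$, a condition not mentioning $C$ at all, and clause 4(b) is again vacuous; hence $[C\!:\!\phi]_s$ and $[D\!:\!\phi]_s$ are equivalent. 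For Monotonicity I would unfold the abbreviation $\[D\!:\!X\]_s = \bigvee_{\varnothing\neq Z\subseteq X}[D\!:\!Z]_s$: assuming $w\Vdash[C\!:\!X]_s$ with $C\subseteq D$ and $s\le s'$, clause 4(a) for the larger coalition $D$ and stronger bound $s$ follows because every $D$-strategy extends a $C$-strategy and every $s$-bounded profile is $s'$-bounded; this gives $w\Vdash\[D\!:\!X\]_s$ by taking $Z$ to be a minimal subset of $X$ for which clause 4(a) still holds (such a minimal $Z$ exists since $X$ is finite), which then also satisfies clause 4(b) by minimality.

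The Combination axiom is the main obstacle and deserves the most care. Assume $w\Vdash[C\!:\!X]_s$ and $w\Vdash[C\!:\!Y]_s$; I must show $w\Vdash\[C\!:\!X\otimes Y\]_s$, i.e.\ that clause 4(a) holds for $X\otimes Y$ and some minimal witnessing subset. For clause 4(a): given a $C$-strategy $t$, clause 4(a) for $X$ yields $\phi\in X$ forced along $t$, and clause 4(a) for $Y$ yields $\psi\in Y$ forced along $t$; then any $s$-bounded profile $\delta$ with $t=_C\delta$ and $(w,\delta,u)\in M$ has $u\Vdash\phi$ and $u\Vdash\psi$, hence $u\Vdash\phi\wedge\psi$, and $\phi\wedge\psi\in X\otimes Y$. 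So clause 4(a) holds for $X\otimes Y$; since $X\otimes Y$ is finite I then pick a minimal $Z\subseteq X\otimes Y$ still satisfying 4(a), and this $Z$ automatically satisfies 4(b), giving $w\Vdash[C\!:\!Z]_s$ and therefore $w\Vdash\[C\!:\!X\otimes Y\]_s$.

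Finally, the Substitution rule: suppose $\vdash\phi\to\tau(\phi)$ for every $\phi\in X$ (hence, by the already-established validity of theorems, $\phi\to\tau(\phi)$ is valid in every state), and assume $w\Vdash[C\!:\!X]_s$; I must show $w\Vdash\[C\!:\!\tau(X)\]_s$. For clause 4(a) with target set $\tau(X)$: given a $C$-strategy $t$, clause 4(a) for $X$ gives $\phi\in X$ forced along $t$; since $\phi\to\tau(\phi)$ is valid, $\tau(\phi)$ is also forced along $t$, and $\tau(\phi)\in\tau(X)$. So clause 4(a) holds for $\tau(X)$; as before, finiteness of $\tau(X)$ lets me extract a minimal subset $Z\subseteq\tau(X)$ satisfying 4(a), which then satisfies 4(b), yielding $w\Vdash\[C\!:\!\tau(X)\]_s$. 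A small technical point I would flag here is that the Substitution-rule premises are themselves theorems (derived, possibly using Substitution), so the induction should be on derivation length with all rules treated uniformly; validity of the premises is then available from the inductive hypothesis. I expect the only genuinely delicate bookkeeping to be the repeated ``extract a minimal subset of a finite set on which clause 4(a) persists'' move, which is the common mechanism underlying soundness of Combination, Monotonicity, and Substitution, so I would isolate it as a small lemma first.
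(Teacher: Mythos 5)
Your proposal is correct, but there is nothing in the paper to compare it against: the soundness theorem is stated with a terminal \qed and no proof is given, so the argument you supply is exactly the kind of routine-but-necessary verification the authors chose to omit. Your decomposition (validity of each axiom and rule, then an induction; the lift to $X\vdash\phi$ is immediate because $X\vdash\phi$ only adds Modus Ponens over theorems) is the standard and intended route. The one genuinely non-routine point is the one you isolate: the conclusions of Combination, Monotonicity, and Substitution are weak dilemmas $\[C\!:\!X\]_s$, so after verifying clause 4(a) for the target set you must extract a nonempty subset $Z$ that is minimal with respect to 4(a); this exists because the sets occurring in the modality are finite, and minimality gives 4(b) because the failure of 4(a) for a nonempty $Y\subsetneq Z$ is literally the existential statement required by 4(b) for that $Y$ (the same strategy witnesses it for all $\phi\in Y$). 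Your treatments of Minimality (clause 4(b) of the antecedent directly refutes clause 4(a) for the proper subset), No Alternatives (for a singleton, 4(a) does not mention the coalition and 4(b) is vacuous), Necessitation, and the coalition/sacrifice directions in Monotonicity ($D$-strategies restrict to $C$-strategies, $s$-bounded profiles are $s'$-bounded) are all sound, and your remark that the Substitution-rule premises are themselves theorems, handled by a uniform induction on derivations, closes the only remaining loose end.
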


The proof of the following completeness theorem can be found in~\cite{ny19arxiv}.

\begin{theorem}[strong completeness]
For any set of formulae $X$ and any formula $\phi$, if $X\nvdash\phi$, then there is a game and a state $w$ of this game such that $w\Vdash\chi$ for each formula $\chi\in X$ and $w\nVdash\phi$. 
\end{theorem}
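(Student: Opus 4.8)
The plan is the standard Lindenbaum-style wrap-up on top of the canonical game and the Truth Lemma. Suppose $X\nvdash\phi$. The first step is to observe that the set $X\cup\{\neg\phi\}$ is consistent: otherwise $X\vdash\neg\neg\phi$ by the deduction theorem and propositional reasoning, hence $X\vdash\phi$, contradicting the assumption. The second step is to apply Lindenbaum's lemma to extend $X\cup\{\neg\phi\}$ to a maximal consistent set $w$. Since the set $W$ of states of the canonical game is by definition exactly the collection of all maximal consistent sets, $w$ is a state of the canonical game $(W,\Delta,|\cdot|,M,\pi)$ defined in this section (and this is a legitimate game in the sense of Definition~\ref{game}, since $W\ne\varnothing$, $\Delta\ne\varnothing$, and the cost function, mechanism, and valuation are specified).

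The third step is to invoke the Truth Lemma (Lemma~\ref{truth lemma}). For each formula $\chi\in X$ we have $\chi\in X\subseteq w$, so $w\Vdash\chi$. Likewise $\neg\phi\in w$, and since $w$ is consistent, $\phi\notin w$, so $w\nVdash\phi$. This produces the required game and state $w$, completing the proof.

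I expect this theorem itself to be entirely routine; the real work has already been carried out in the preceding subsections. The genuine obstacle is the induction step of the Truth Lemma for formulae of the form $[C\!:\!X]_s$, which splits into the two directions handled by Lemma~\ref{left arrow lemma} and Lemma~\ref{right arrow lemma}, and which in turn depend on the delicate design of the canonical mechanism $M$ (Definition~\ref{canonical M}) together with the choice-function / compactness argument inside the proof of Lemma~\ref{right arrow lemma}. Granting those lemmas, the completeness theorem follows immediately from consistency of $X\cup\{\neg\phi\}$, Lindenbaum's lemma, and the Truth Lemma as sketched above.
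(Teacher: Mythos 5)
Your proposal is correct and follows essentially the same route as the paper: consistency of $X\cup\{\neg\phi\}$, Lindenbaum extension to a maximal consistent set $w$ viewed as a state of the canonical game, and then the Truth Lemma to get $w\Vdash\chi$ for all $\chi\in X$ and $w\nVdash\phi$. The only cosmetic difference is that the paper derives $w\nVdash\phi$ via $w\Vdash\neg\phi$ and the semantic clause for negation, whereas you argue $\phi\notin w$ directly from consistency; both are fine.
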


\section{Conclusion}\label{conclusion section}

The contribution of this paper is three-fold. First, we introduce a formal semantics for ethical dilemmas in a strategic game setting expressed through the modality $[C\!:\!X]_s$. Second, we show that this modality is not definable through the blameworthiness modality. Finally,  we give a complete axiomatization of the  properties of the dilemma modality. 

Our completeness result is the {\em strong} completeness theorem with respect to the proposed semantics. We believe that the standard filtration technique could be used to prove {\em weak} completeness with respect to the class of finite games. This would imply decidability of our logical system, assuming the sacrifice function is rational-valued functions.



\bibliography{sp}

\end{document}